
\typeout{IJCAI-19 Multiple authors example}


\documentclass{article}
\pdfpagewidth=8.5in
\pdfpageheight=11in
\usepackage{ijcai19}

\usepackage{multirow}
\usepackage{adjustbox}
\usepackage{array}
\usepackage{times}
\usepackage{soul}
\usepackage{url}
\usepackage[hidelinks]{hyperref}
\usepackage[utf8]{inputenc}
\usepackage[small]{caption}
\usepackage{graphicx}
\usepackage{amsmath}
\usepackage{booktabs}
\urlstyle{same}

\usepackage{times}
\usepackage{graphicx} 
\usepackage{enumerate}
\usepackage{amssymb}
\usepackage{amsmath}
\usepackage{algorithm}
\usepackage{bm}
\usepackage{algpseudocode}
\usepackage{subcaption}
\usepackage{amsmath}
\usepackage{multirow}
\usepackage{amsthm}
\usepackage{graphicx,mathtools}

\usepackage{url}
\usepackage{caption,color}
\usepackage{pifont}

\usepackage[framed,numbered]{mcode}

\newtheorem{theorem}{Theorem}






\title{HDI-Forest: Highest Density Interval Regression Forest}
\author{
	Lin Zhu\footnote{Contact Author}\and
	Jiaxing Lu\and
	Yihong Chen\\
	\affiliations
	Ctrip Travel Network Technology Co., Limited.\\
	\emails
	\{zhulb, lujx, yihongchen\}@Ctrip.com
}

\begin{document}

\maketitle

\begin{abstract}	
By seeking the narrowest prediction intervals (PIs) that satisfy the specified coverage probability requirements, the recently proposed quality-based PI learning principle can extract high-quality PIs that better summarize the predictive certainty in regression tasks, and has been widely applied to solve many practical problems. Currently, the state-of-the-art quality-based PI estimation methods are based on deep neural networks or linear models. In this paper, we propose Highest Density Interval Regression Forest (HDI-Forest), a novel quality-based PI estimation method that is instead based on Random Forest. HDI-Forest does not require additional model training, and directly reuses the trees learned in a standard Random Forest model. By utilizing the special properties of Random Forest, HDI-Forest could efficiently and more directly optimize the PI quality metrics. Extensive experiments on benchmark datasets show that HDI-Forest significantly outperforms previous approaches, reducing the average PI width by over 20\% while achieving the same or better coverage probability.
\end{abstract}

\section{Introduction}

Let ${D}_{XY}$ be an unknown joint distribution over instances $x\in \mathcal{X}$ and responses $y\in \mathbb{R}$, where \textit{X}, \textit{Y} denote random variables, and \textit{x}, \textit{y} are their instantiations. A common goal shared by many predictive tasks is to infer certain properties of the conditional distribution ${{D}_{\left. Y \right|X}}$. For example, in a standard regression task, we are given a training set of $\left\{ \left( {{x}_{i}},{{y}_{i}} \right) \right\}_{i=1}^{n}$ sampled i.i.d. from ${D}_{XY}$ and a new test instance $x$ sampled from ${D}_{X}$, the goal is to predict $E\left( Y\left| X=x \right. \right)$, namely the conditional mean of \textit{Y} at $x$. Although estimation of the mean is highly useful in practice, it conveys no information about the predictive uncertainty, which can be very important for improving the reliability and robustness of the predictions \cite{khosravi2011lower,pmlr-v80-pearce18a}. 

Prediction intervals (PIs) is one of the most widely used tools for quantifying and representing the uncertainty of predictions. For a specified confidence level $\alpha$, the goal of PI construction is to estimate the $100\left( 1-\alpha  \right)\%$ interval $\left[ l,u \right]\in {{\mathbb{R}}^{2}}$ that will cover no less than $1-\alpha$ of the probability mass of ${{D}_{\left. Y \right|X=x}}$, namely:
\begin{equation}
\label{PI}
P\left( \left. l\le Y\le u \right|X=x \right)\ge 1-\alpha.
\end{equation}
PIs directly express uncertainty by providing lower and upper bounds for each prediction with specified coverage probability, and are more informative and useful for decision making than conditional means alone \cite{pmlr-v80-pearce18a,stine1985bootstrap}.

A plethora of techniques have been proposed in the literature for construction of PIs \cite{rosenfeld2018}. However, the majority of existing works only consider the coverage probability criteria (\ref{PI}), and yet ignore other crucial aspects of the elicited PIs \cite{khosravi2011lower}. In particular, there exists a fundamental trade-off between PI width and coverage probability, and (\ref{PI}) can always be trivially fulfilled by a large enough yet useless interval \cite{rosenfeld2018}. This motivates the development of \textit{quality-based} PI elicitation principle, which seeks the shortest PI that contains the required amount of probability \cite{pmlr-v80-pearce18a}. So far, quality-based PI estimation has been applied to solve many practical problems, such as the predictions of electronic price \cite{shrivastava2015prediction}, wind speed \cite{lian2016landslide}, and solar energy \cite{galvan2017multi}, etc. 

Although some promising results have been shown, existing approaches for quality-based PI construction still have some limitations. Firstly, most of these methods are built upon deep neural networks (DNNs) \cite{khosravi2011lower,pmlr-v80-pearce18a}, and yet currently the quality-based PI learning principle has primarily been applied to handle tabular data. Despite the tremendous success of DNNs for various domains such as image and text processing, it is known that for tabular data, tree-based ensembles such as Random Forest \cite{breiman2001random} and Gradient Boosting Decision Tree (GBDT) \cite{friedman2001greedy} often perform better, and are more widely used in practice \cite{klambauer2017self,huang2015telco,feng2018multi}. Secondly, quality-based PI learning objectives are generally non-convex, non-differentiable, and even discontinuous, and are thus difficult to optimize. Although existing methods partially solved this problem by optimizing continuous and differentiable surrogate functions instead \cite{pmlr-v80-pearce18a}, the overall predictive performance may be improved by resolving such a mismatch between the objective functions used in training and the final evaluation metrics used in testing.

Motivated by the above considerations, in this paper we propose Highest Density Interval Regression Forest (HDI-Forest), a novel quality-based PI estimation method based on Random Forest. HDI-Forest does not require additional model training, and directly reuses the trees learned in a standard Random Forest model. By utilizing the special properties of Random Forest introduced in previous works \cite{lin2006random,meinshausen2006quantile}, HDI-Forest could efficiently and more directly optimize the PI quality evaluation metrics. Experiments on benchmark datasets show that HDI-Forest significantly outperforms previous approaches in terms of PI quality metrics.

The rest of the paper is organized as follows. In Section 2 we review related work on prediction intervals. In Section 3, the mechanism of Random Forest is introduced, along with its interpretation as an approximate nearest neighbor method \cite{lin2006random}. Using this interpretation, HDI-Forest is introduced in Section 4 as a generalization of Random Forest. Encouraging numerical results for benchmark data sets are presented in Section 5.

\section{Related Work}

\subsection{Quantile-based PI Estimation}

For the pair of random variables $\left( X,Y \right)$, the \textit{conditional quantile} ${{Q}_{\tau }}\left( x \right)$ is the cut point that satisfies:
\begin{equation}
	P\left( Y\le {{Q}_{\tau }}\left( x \right)\left| X=x \right. \right)=\tau.
\end{equation}
It is easy to verify that the equal-tailed interval $\left[ l,u \right]=\left[ {{Q}_{\alpha /2}}\left( x \right),{{Q}_{1-\alpha /2}}\left( x \right) \right]$ is a valid solution to (\ref{PI}). Based on this insight, the classic approach for constructing PIs would first estimate ${{D}_{\left. Y \right|X=x}}$, and then estimate its quantiles as solutions \cite{rosenfeld2018}. If ${{D}_{\left. Y \right|X=x}}$ is assumed to have some parametric form (e.g., Gaussian), it is often possible to compute the desired quantiles in closed-form. However, explicit assumptions about the conditional distribution may be too restrictive for real-world data modeling \cite{sharpe1970robustness}, and considerable research has been devoted to tackle this limitation. For example, Quantile Regression \cite{koenker2001quantile} avoid explicit specification of the conditional distribution, and directly infer the conditional quantiles from data by minimizing an asymmetric variant of the absolute loss. On the other hand, re-sampling-based approaches would train a number of models on different re-sampled versions of the training dataset, where commonly used re-sampling techniques include leave-one-out \cite{steinberger2016leave} and bootstrap \cite{stine1985bootstrap}, and then use the point forecasts of trained models to estimate the conditional quantiles. Due to the need for training multiple models, a major disadvantage of re-sampling-based methods is the high computational cost for large-scale datasets \cite{rivals2000construction,khosravi2011lower}. A notable exception is quantile regression forest \cite{meinshausen2006quantile}, which estimates quantiles by using the sets of local weights generated by Random Forest. Quantile regression forest is based on the alternative interpretation of Random Forest as an approximate nearest neighbor method \cite{lin2006random}, the details of which will be reviewed in Section 3.1. 

In recent years, motivated by the impressive successes of Deep Neural Network (DNN) models in miscellaneous machine learning tasks, there has been growing interest in enhancing DNN algorithms with uncertainty estimation capabilities. For instance, Mean Variance Estimation (MVE) \cite{khosravi2014optimized,lakshminarayanan2017simple} assumes the conditional ${{D}_{\left. Y \right|X=x}}$ to be Gaussian, and jointly predicts its mean and variance via maximum likelihood estimation (MLE), while \cite{gal2016dropout} propose using Monte Carlo dropout \cite{srivastava2014dropout} to estimate predictive uncertainty. We refer the interested reader to \cite{pmlr-v80-pearce18a} for a up-to-date overview of related techniques. Despite the differences in learning principles, most existing DNN-based approaches still adopt the traditional strategy of extracting quantile-based equal-tailed intervals. 

\subsection{Quality-based PI Estimation}

As demonstrated in the previous section, existing studies related to PI construction have tended to focus on the intermediate problems of estimating conditional distributions and quantiles, and yet very little attention has been paid to quantitatively examine the quality of elicited PIs \cite{khosravi2011lower}. The coverage criteria (\ref{PI}) could be adopted to evaluate the constructed intervals, but (\ref{PI}) alone is not sufficient for determining a meaningful PI. For example, if the intervals are set to be wide enough (e.g., $\left[ -\infty ,+\infty  \right]$), the true response values could always be contained therein. This phenomenon emphasizes a fundamental trade-off between coverage probability and width of the PI \cite{khosravi2011comprehensive}, and motivates the following quality-based criteria for determining optimal PIs \cite{pmlr-v80-pearce18a}:
\begin{equation}
\label{PIwithBudget}
\begin{aligned}
& \underset{l,u}{\mathop{\min }}\enspace u-l \\ 
& \text{s}\text{.t}\text{.}\enspace P\left( l\le Y\le u\left| X=x \right. \right)\ge 1-\alpha.  \\ 
\end{aligned}
\end{equation}
In other words, given any $0\le \alpha \le 1$, we would like to find the shortest interval that covers the required probability mass. Such intervals are known as the highest density intervals in the statistical literature \cite{box2011bayesian}. An illustrative example that compares highest density and traditional quantile-based equal-tailed intervals is provided in Fig.1.
\begin{figure}
	\centering
	\includegraphics[width=6cm]{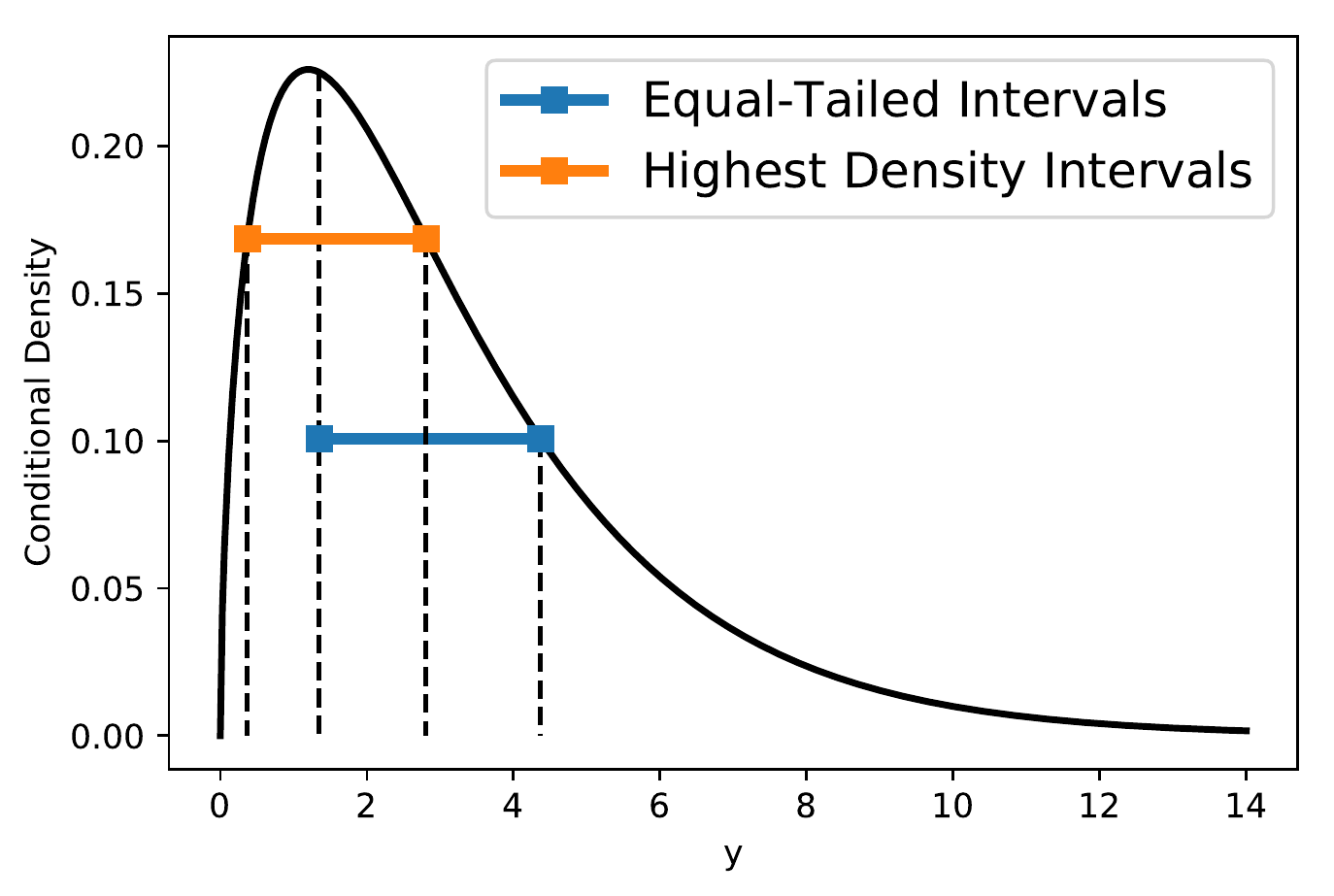} 
	\caption{Comparison of 50\% highest density and equal-tailed intervals for a Gamma distribution $\Gamma \left( 0,2 \right)$.}
	\label{HDI}
\end{figure}

So far, a number of methods have been proposed to predict quality-based PIs. The key idea shared by these approaches is to infer model parameters by minimizing a loss function based on (\ref{PIwithBudget}). The loss function typically consists of two parts, which respectively measures the mean PI width (MPIW) and PI coverage probability (PICP) on the training set \cite{pmlr-v80-pearce18a}. For example, Lower Upper Bound Estimation (LUBE) \cite{khosravi2011lower} trains neural networks by optimizing 
\begin{equation}
	\label{LUBE}
	\frac{\text{MPIW}}{r}\left( 1+\exp \left( \lambda \max \left( 0,\left( 1-\alpha  \right)-\text{PICP} \right) \right) \right),
\end{equation}
where $r$ is the numerical range of the response variable, $\lambda$ controls the trade-off between PICP and MPIW. A limitation of the LUBE loss (\ref{LUBE}) is that it is non-differentiable and hard to optimize. Quality-driven Neural Network Ensemble (QD-Ens) \cite{pmlr-v80-pearce18a} instead minimizes
\begin{equation}
	\label{QD-Ens}
	\text{MPI}{{\text{W}}_{\text{capt}}}+\lambda \frac{n}{\alpha \left( 1-\alpha  \right)}\max {{\left( 0,\left( 1-\alpha  \right)-\text{PICP} \right)}^{2}},
\end{equation}
where \textit{n} is the training set size, $\text{MPI}{{\text{W}}_{\text{capt}}}$ denotes MPIW of points that fall into the predicted intervals. Experiments show that QD-Ens significantly outperforms previous neural-network-based methods for eliciting quality-based PIs. On the other hand, \cite{rosenfeld2018} propose IntPred for quality-based PI construction in the batch learning setting, where PIs for a set of test points are constructed simultaneously. To deal with the non-differentiability of PI quality metrics, both QD-Ens and IntPred instead adopt proxy losses that can be minimized efficiently using standard optimization techniques.

The proposed HDI-Forest model is different from existing quality-based PI estimation works in multiple aspects. Firstly, existing methods mainly consider linear or DNN-based predictive functions, while HDI-Forest is built upon tree ensembles; secondly, HDI-Forest does not require local search heuristics or smooth/convex loss relaxation techniques, and could efficiently obtain global optimal solution of the non-differentiable objective function; finally, by exploiting the special property of Random Forest that will be discussed in the next section, HDI-Forest does not require model re-training for different trade-offs between PICP and MPIW.  

\section{Random Forest}

A Random Forest is a predictor consisting of a collection of \textit{m} randomized regression trees. Following the notations in \cite{breiman2001random,meinshausen2006quantile}, each tree $T\left( \theta  \right)$ in this collection is constructed based on a random parameter vector $\theta$. In practice, $\theta$ could control various aspects of the tree growing process, such as the re-sampling of the input training set and the successive selections of variables for tree splitting. Once learned, the \textit{L} leaves of $T\left( \theta  \right)$ partition the input feature space $\mathcal{X}$ into \textit{L} non-overlapping axis-parallel subspaces $\left\{ {{\mathcal{X}}_{1}},{{\mathcal{X}}_{2}},\cdots ,{{\mathcal{X}}_{L}} \right\}$, then for any $x\in \mathcal{X}$, there exists one and only one leaf $l\left( x,\theta  \right)$ such that $x\in {{\mathcal{X}}_{l\left( x,\theta  \right)}}$. Meanwhile, the prediction of  $T\left( \theta  \right)$ for \textit{x} is given by averaging over the observations that fall into ${{\mathcal{X}}_{l\left( x,\theta  \right)}}$. Concretely, let $w\left( {{x}_{i}},x,\theta  \right),1\le i\le m$ be defined as:
\begin{equation}
\label{weighted_mean}
w\left( {{x}_{i}},x,\theta  \right)=\frac{\mathbb{I}\left( {{x}_{i}}\in {{\mathcal{X}}_{l\left( x,\theta  \right)}} \right)}{\left| \left\{ j:{{x}_{j}}\in {{\mathcal{X}}_{l\left( x,\theta  \right)}} \right\} \right|},
\end{equation}
where $\mathbb{I}\left( \cdot  \right)$ is the indicator function and  $\left| \cdot  \right|$ denotes the cardinality of a set, then 
\begin{equation}
{{\widehat{y}}_{\text{single-tree}}}\left( x,\theta  \right)=\sum\limits_{i=1}^{n}{w\left( {{x}_{i}},x,\theta  \right){{y}_{i}}}.
\end{equation}
In Random Forest regression, the conditional mean of \textit{Y} given $X=x$ is predicted as the average of predictions of \textit{m} trees constructed with i.i.d. parameters ${{\theta }_{i}}$, $1\le i\le m$:
\begin{equation}
\label{prediction}
\widehat{y}\left( x \right)=\frac{1}{m}\sum\limits_{i=1}^{m}{{{\widehat{y}}_{\text{single-tree}}}\left( x,{{\theta }_{i}} \right)}.
\end{equation}

\subsection{Random Forest for Conditional Distribution Estimation}

Although the original formulation of Random Forest only predicts the conditional mean, the learned trees could also be exploited to predict other interesting quantities \cite{meinshausen2006quantile,li2017forest,feng2018autoencoder}. For example, note that (\ref{prediction}) can be rearranged as 
\begin{equation}
\widehat{y}\left( x \right)=\sum\limits_{i=1}^{n}{w\left( {{x}_{i}},x \right){{y}_{i}}},
\end{equation}
where 
\begin{equation}
\label{predictionrefor}
w\left( {{x}_{i}},x \right)=\frac{1}{m}\sum\limits_{i=1}^{m}{w\left( {{x}_{i}},x,{{\theta }_{i}} \right)}.
\end{equation}
Therefore, (\ref{prediction}) could be alternatively interpreted as the weighted average of the response values of all training instances, and the weight for a specific instances ${{x}_{i}}$ measures the frequency that ${{x}_{i}}$ and \textit{x} are partitioned into the same leaf in all grown trees, which offers an intuitive measure of similarity between them. Theoretically, it can also be shown that ${{x}_{i}}$ tends to be weighted higher if the conditional distributions ${{D}_{\left. Y \right|X=x}}$ and ${{D}_{\left. Y \right|X={{x}_{i}}}}$ are similar \cite{lin2006random}. Furthermore, note that the conditional cumulative distribution function of $Y$ given $X=x$ can be written as:
\begin{equation}
\label{cdf}
\begin{aligned}
F\left( \left. y \right|X=x \right)&=P\left( \left. Y\le y \right|X=x \right) \\ 
& =E\left( \left. \mathbb{I}\left( Y\le y \right) \right|X=x \right).
\end{aligned}
\end{equation}
It could be proven that under certain conditions, (\ref{cdf}) can be estimated using the weights from (\ref{predictionrefor}) as \cite{meinshausen2006quantile}:
\begin{equation}
\label{pred-cdf}
\widehat{F}\left( \left. y \right|X=x \right)=\sum\limits_{i=1}^{n}{w\left( {{x}_{i}},x \right)\mathbb{I}\left( {{y}_{i}}\le y \right)}.
\end{equation}
It has been demonstrated in \cite{meinshausen2006quantile} that (\ref{pred-cdf}) can be exploited to accurately estimate conditional quantiles. In this work, we instead utilize it to perform quality-based PI estimation, as detailed in the next section. 


\section{Highest Density Interval Regression Forest}

In the section, we describe the proposed HDI-Forest algorithm. Concretely, we first use the standard Random Forest algorithm to infer a number of trees from the data, then based on (\ref{pred-cdf}), for any observation \textit{x}, the probability that the associated response value would fall into interval $[l,u]$ can be estimated as:
\begin{equation}
\label{intervalprob}
\widehat{P}\left( \left. l\le Y\le u \right|X=x \right)=\sum\limits_{i=1}^{n}{w\left( {{x}_{i}},x \right) \mathbb{I}\left( l\le {{y}_{i}}\le u \right)}.
\end{equation}
Using (\ref{intervalprob}), the quality-based criteria (\ref{PIwithBudget}) can be approximated as the following optimization problem:
\begin{equation}
\label{obj}
\begin{aligned}
& \underset{l,u}{\mathop{\min }}\enspace u-l \\ 
& \text{s}\text{.t}\text{.}\enspace \widehat{P}\left( l\le Y\le u\left| X=x \right. \right)\ge 1-\alpha.  \\ 
\end{aligned}
\end{equation}
Note that the optimization problem in (\ref{obj}) is non-convex since its constraint function is piece-wise constant and discontinuous. However, its global optimal solution can still be efficiently obtained by exploiting the problem structure, as detailed below.

Firstly, we present Theorem 1, which shows that the optimal solution of (\ref{obj}) must exist in a pre-defined finite set:
\begin{theorem}\label{thm1}
	The optimal solution of (\ref{obj}) satisfies the following conditions:
	\begin{equation}
	\label{opt_cond}
	u, l\in \left\{ {{y}_{i}} \right\}_{i=1}^{n}.
	\end{equation}	
\end{theorem}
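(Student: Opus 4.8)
The plan is to exploit the piecewise-constant structure of the constraint in (\ref{obj}): since $\widehat{P}\left(l \le Y \le u \mid X=x\right)$ depends on $(l,u)$ only through which responses $y_i$ fall inside $[l,u]$, any feasible interval can be contracted until both endpoints land on training responses \emph{without changing the covered set}, hence without violating feasibility and without increasing the width. I would therefore start from an arbitrary optimal interval, transform it into one whose endpoints lie in $\{y_i\}_{i=1}^n$ while preserving the objective value, and then invoke optimality to conclude that the original endpoints must already satisfy (\ref{opt_cond}).

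Concretely, I would fix an optimal solution $(l^\ast, u^\ast)$. Because the weights $w\left(x_i,x\right)$ are nonnegative and sum to one, the constraint $\widehat{P} \ge 1-\alpha > 0$ forces at least one response to lie in $[l^\ast,u^\ast]$, so the index set $S = \{ i : l^\ast \le y_i \le u^\ast \}$ is nonempty. I would then define the contracted endpoints
\begin{equation}
l' = \min_{i \in S} y_i, \qquad u' = \max_{i \in S} y_i ,
\end{equation}
which are themselves training responses and satisfy $l^\ast \le l' \le u' \le u^\ast$. The crucial observation is that no $y_i$ lies in $[l^\ast, l')$ or in $(u', u^\ast]$, so $\mathbb{I}\left(l' \le y_i \le u'\right)$ agrees with $\mathbb{I}\left(l^\ast \le y_i \le u^\ast\right)$ for every $i$; consequently $\widehat{P}\left(l' \le Y \le u' \mid X=x\right) = \widehat{P}\left(l^\ast \le Y \le u^\ast \mid X=x\right) \ge 1-\alpha$, i.e.\ $(l',u')$ remains feasible.

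Finally, since $l' \ge l^\ast$ and $u' \le u^\ast$, the contracted width obeys $u' - l' \le u^\ast - l^\ast$, while optimality of $(l^\ast,u^\ast)$ supplies the reverse inequality. Equality then forces $l'=l^\ast$ and $u'=u^\ast$: the quantities $u'-u^\ast$ and $l'-l^\ast$ have opposite fixed signs yet their difference vanishes, so both are zero. Hence the optimal endpoints coincide with the contracted ones, $l',u'\in\{y_i\}_{i=1}^n$, which establishes (\ref{opt_cond}). I expect the main obstacle to be the careful bookkeeping in the invariance step --- verifying that contracting each endpoint to the nearest enclosed response genuinely leaves the covered index set unchanged, together with the quiet but necessary use of $1-\alpha>0$ to guarantee that $S$ is nonempty (so that $l'$ and $u'$ are well defined). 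Once these points are pinned down, the optimality argument is routine.
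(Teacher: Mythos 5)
Your proof is correct and follows essentially the same route as the paper's: contract both endpoints of an optimal interval inward to the nearest covered responses, observe that the piecewise-constant estimated coverage is unchanged, and let optimality force the original endpoints to coincide with those data points (the paper phrases this as a contradiction, you phrase it directly, but the substance is identical). If anything, your version is slightly more careful than the paper's: you justify nonemptiness of the covered set $S$ via $1-\alpha>0$, and defining the contracted upper endpoint as $\max_{i\in S} y_i$ avoids the paper's apparently typographical strict inequality $y_i < u$ in its definition of $u_{\text{alt}}$, which as written would wrongly drop a covered response sitting exactly at $u$.
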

\begin{proof} 
	Assume by contradiction that a pair of $[l,u]$ optimizes (\ref{obj}) and does not satisfy (\ref{opt_cond}), then
	\begin{equation}
	\label{not_hold}
	\widehat{P}\left( \left. l\le Y\le u \right|X=x \right)\ge 1-\alpha.
	\end{equation}
	Let ${{l}_{\text{alt}}}$ and ${{u}_{\text{alt}}}$ be defined as
	\begin{equation}
	\label{l_alter}
	{{l}_{{\text{alt}}}}=\underset{i}{\mathop{\min }}\,\left\{ \left. {{y}_{i}} \right|{{y}_{i}}\ge l,1\le i\le n \right\},
	\end{equation}
	\begin{equation}
	\label{r_alter}
	{{u}_{\text{alt}}}=\underset{i}{\mathop{\max }}\,\left\{ \left. {{y}_{i}} \right|{{y}_{i}}<u,1\le i\le n \right\}.
	\end{equation}
	Recall that $[l,u]$ does not satisfy (\ref{opt_cond}), thus either ${{u}_{\text{alt}}}\ne u$ or ${{l}_{\text{alt}}}\ne l$, and
	\begin{equation}
	\label{cond_sub_1}
	{{u}_{{\text{alt}}}}-{{l}_{{\text{alt}}}}<u-l.
	\end{equation}
	Meanwhile, by combining (\ref{not_hold}), (\ref{l_alter}), and (\ref{r_alter}), we have
	\begin{equation}
	\label{cond_sub_2}
	\begin{aligned}
	& \widehat{P}\left( \left. {{l}_{{\text{alt}}}}\le Y\le {{u}_{{\text{alt}}}} \right|X=x \right) \\ 
	& =\sum\limits_{i=1}^{n}{w\left( {{x}_{i}},x \right)\mathbb{I}\left( {{l}_{{\text{alt}}}}\le {{y}_{i}}\le {{u}_{{\text{alt}}}} \right)} \\ 
	& =\sum\limits_{i=1}^{n}{w\left( {{x}_{i}},x \right)\mathbb{I}\left( {{l}}\le {{y}_{i}}\le {{u}} \right)} \\ 
	& =\widehat{P}\left( \left. l\le Y\le u \right|X=x \right) \\ 
	& \ge 1-\alpha .  
	\end{aligned}
	\end{equation}
	Equations (\ref{cond_sub_1}) and (\ref{cond_sub_2}) mean that $[{{l}_{\text{alt}}}, {{u}_{\text{alt}}}]$ is a feasible and better solution than $[l,u]$, which contradicts the assumption that $[l,u]$ optimizes (\ref{obj}).
\end{proof} 
Let the unique elements of $\left\{ {{y}_{i}} \right\}_{i=1}^{n}$ be arranged in increasing order as ${{\widetilde{y}}_{1}}<{{\widetilde{y}}_{2}}<\cdots <{{\widetilde{y}}_{\widetilde{n}}}$. Then based on Theorem 1, (\ref{obj}) can be equivalently reformulated as 
\begin{equation}
\label{new_formula}
\begin{aligned}
& \underset{i,j}{\mathop{\min }}\enspace {{\widetilde{y}}_{j}}-{{\widetilde{y}}_{i}} \\ 
& \text{s}\text{.t}\text{.}\enspace \sum\limits_{k=i}^{j}{{{w}_{k}}}\ge 1-\alpha , \\ 
\end{aligned}
\end{equation}
where 
\begin{equation}
	\label{wk}
	{{w}_{k}}=\sum\limits_{i=1}^{n}{\mathbb{I}\left( {{y}_{i}}={{\widetilde{y}}_{k}} \right)}w\left( {{x}_{i}},x \right).
\end{equation}
Problem (\ref{new_formula}) can then be solved simply by enumerating and evaluating all pairs of elements from $\left\{ {{\widetilde{y}}_{i}} \right\}_{i=1}^{\widetilde{n}}$, which nevertheless is still costly and takes $O\left( {{\widetilde{n}}^{2}} \right)$ time per prediction. Fortunately, we can reduce the time complexity by rearranging the computations, so that the time is only linear to $O\left( \widetilde{n} \right)$. The method is described below.

Firstly, (\ref{new_formula}) can be optimized using a two-stage approach instead: we start by solving the following optimization problem for each $1\le i\le \widetilde{n}$:

\begin{equation}
\label{subproblem}
\begin{aligned}
& \underset{j}{\mathop{\min }}\enspace {{\widetilde{y}}_{j}}-{{\widetilde{y}}_{i}} \\ 
& \text{s}\text{.t}\text{.}\enspace \sum\limits_{k=i}^{j}{{{w}_{k}}}\ge 1-\alpha. \\ 
\end{aligned}
\end{equation}
Then, let $\mathcal{I}\subseteq \left\{ \left. i \right|i=1,2,\cdots ,\widetilde{n} \right\}$ be the set of indices for which (\ref{subproblem}) has a feasible solution, and the optimal solution of (\ref{subproblem}) for $i\in \mathcal{I}$ be denoted as ${{j}_{{\text{opt}}}}\left( i \right)$, it is easy to verify that (\ref{new_formula}) is optimized by the pair of $\left( i,{{j}_{\text{opt}}}\left( i \right) \right)$ that attains the smallest ${{\widetilde{y}}_{{{j}_{\text{opt}}}\left( i \right)}}-{{\widetilde{y}}_{i}}$.

To compute ${{j}_{{\text{opt}}}}\left( i \right)$ for $i\in \mathcal{I}$, we exploit the strict monotonicity of $\left\{ {{\widetilde{y}}_{i}} \right\}_{i=1}^{\widetilde{n}}$, and equivalently reformulate (\ref{subproblem}) as
\begin{equation}
	\label{subproblem1}
	\begin{aligned}
	& \underset{j}{\mathop{\min }}\enspace j \\ 
	& \text{s}\text{.t}\text{.}\enspace \sum\limits_{k=i}^{j}{{{w}_{k}}}\ge 1-\alpha. \\ 
	\end{aligned}	
\end{equation}
In other words, ${{j}_{{\text{opt}}}}\left( i \right)$ is simply the smallest index for which the constraint in  (\ref{subproblem}) holds. Moreover, ${{j}_{{\text{opt}}}}\left( i \right)$ is monotonously increasing with respect to $i$:
\begin{theorem}\label{thm2}
	For any $1\le {{i}_{2}}<{{i}_{1}}\le \widetilde{n}$, we have
	\begin{equation}
	\label{opt_cond_1}
	{{j}_{\text{opt}}}\left( {{i}_{2}} \right)\le {{j}_{\text{opt}}}\left( {{i}_{1}} \right).
	\end{equation}
\end{theorem}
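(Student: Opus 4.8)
The plan is to prove the claim by a direct feasibility-transfer argument, exploiting the fact that the weights $w_k$ defined in (\ref{wk}) are all nonnegative: each $w_k$ is a sum of the leaf-frequency weights $w(x_i,x)$ from (\ref{predictionrefor}), and these are themselves nonnegative by (\ref{weighted_mean}). The key observation is that shrinking the lower index of a partial sum can only increase its value: for any $j$ and any $i_2<i_1\le j$,
\[
\sum_{k=i_2}^{j} w_k = \sum_{k=i_2}^{i_1-1} w_k + \sum_{k=i_1}^{j} w_k \ge \sum_{k=i_1}^{j} w_k,
\]
since the first summand is a sum of nonnegative terms. Hence any constraint of the subproblem (\ref{subproblem1}) indexed by $i_1$ that is satisfied is also satisfied by the corresponding constraint indexed by $i_2$; the smaller starting index yields a slacker constraint.

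First I would fix $i_2<i_1$ with $i_1\in\mathcal{I}$ and set $j^{\star}:=j_{\text{opt}}(i_1)$. By the characterization of $j_{\text{opt}}$ in (\ref{subproblem1}) as the smallest feasible index, we have $j^{\star}\ge i_1 > i_2$ together with $\sum_{k=i_1}^{j^{\star}} w_k \ge 1-\alpha$. Applying the displayed inequality with $j=j^{\star}$ gives
\[
\sum_{k=i_2}^{j^{\star}} w_k \ge \sum_{k=i_1}^{j^{\star}} w_k \ge 1-\alpha,
\]
so $j^{\star}$ is feasible for the subproblem indexed by $i_2$ as well; in particular $i_2\in\mathcal{I}$, so $j_{\text{opt}}(i_2)$ is well defined. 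Since $j_{\text{opt}}(i_2)$ is by definition the \emph{smallest} index feasible for the $i_2$-subproblem, and $j^{\star}$ is one such feasible index, I conclude $j_{\text{opt}}(i_2)\le j^{\star}=j_{\text{opt}}(i_1)$, which is exactly (\ref{opt_cond_1}).

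The argument is essentially mechanical once nonnegativity of the $w_k$ is in hand, so I do not anticipate a genuine obstacle. The only point requiring care is confirming that $j_{\text{opt}}(i_2)$ exists at all (i.e.\ that $i_2\in\mathcal{I}$) before invoking its minimality; this is precisely what the feasibility of $j^{\star}$ for the $i_2$-subproblem supplies. As a byproduct, the downward closure of the feasible-index set $\mathcal{I}$ --- namely that $\mathcal{I}$ is an initial segment of $\{1,\dots,\widetilde{n}\}$ --- falls out of the same computation rather than needing a separate argument.
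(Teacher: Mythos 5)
Your proof is correct and rests on exactly the same key step as the paper's: splitting $\sum_{k=i_2}^{j_{\text{opt}}(i_1)} w_k$ at $i_1$ and using nonnegativity of the $w_k$ to transfer feasibility of $j_{\text{opt}}(i_1)$ to the $i_2$-subproblem. The only difference is presentational: the paper argues by contradiction and closes the argument via strict monotonicity of the $\widetilde{y}_k$ (its inequality (\ref{contra_cond_2})), whereas you argue directly through the index-minimality characterization (\ref{subproblem1}), which also yields as a byproduct that the feasible set $\mathcal{I}$ is downward closed---a well-definedness point the paper leaves implicit.
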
	
\begin{proof}
	Assume by contradiction that there exist ${{i}_{1}}$ and ${{i}_{2}}$ such that ${{j}_{\text{opt}}}\left( {{i}_{2}} \right)>{{j}_{\text{opt}}}\left( {{i}_{1}} \right)$ and ${{i}_{1}}>{{i}_{2}}$, recall from the definitions in (\ref{weighted_mean}), (\ref{predictionrefor}) and (\ref{wk}) that ${{w}_{k}}\ge 0,1\le k\le \widetilde{n}$, therefore
	\begin{equation}
	\label{contra_cond_1}
	\begin{aligned}
	\sum\limits_{k={{i}_{2}}}^{{{j}_{{\text{opt}}}}\left( {{i}_{1}} \right)}{w_k}&=\sum\limits_{k={{i}_{1}}}^{{{j}_{{\text{opt}}}}\left( {{i}_{1}} \right)}{w_k}+\sum\limits_{k={{i}_{2}}}^{{{i}_{1}}-1}{w_k} \\ 
	& \ge \sum\limits_{k={{i}_{1}}}^{{{j}_{{\text{opt}}}}\left( {{i}_{1}} \right)}{w_k} \\ 
	& \ge 1-\alpha.
	\end{aligned}		
	\end{equation}
	On the other hand, based on the strict monotonicity of $\left\{ {{\widetilde{y}}_{i}} \right\}_{i=1}^{\widetilde{n}}$ we have
	\begin{equation}
	\label{contra_cond_2}
	{{\widetilde{y}}_{{{j}_{\text{opt}}}\left( {{i}_{1}} \right)}}-{{\widetilde{y}}_{{{i}_{2}}}}<{{y}_{{{j}_{\text{opt}}}\left( {{i}_{2}} \right)}}-{{\widetilde{y}}_{{{i}_{2}}}}.
	\end{equation}
	Equations (\ref{contra_cond_1}) and (\ref{contra_cond_2}) contradict the optimality of ${{{{j}_{{\text{opt}}}}\left( {{i}_{2}} \right)}}$ and thereby we complete the proof.
\end{proof}
Based on the above analysis, in order to solve (\ref{subproblem}) for all $1\le i\le \widetilde{n}$, we only need to walk down the sorted list of {$\left\{ {{\widetilde{y}}_{i}} \right\}_{i=1}^{\widetilde{n}}$ once to identify for each \textit{i} the first index \textit{j} such that $\sum\limits_{k=i}^{j}{{{w}_{k}}}\ge 1-\alpha $. The whole algorithm is presented in Algorithm 1. 

\begin{algorithm}
	\caption{Solve (\ref{subproblem}) for all $1\le i\le \widetilde{n}$}
	\begin{algorithmic}[1]
		\Require {$\left\{ {{\widetilde{y}}_{i}} \right\}_{i=1}^{\widetilde{n}}$ sorted in increasing order, the associated weights $\left\{ {{w}_{i}} \right\}_{i=1}^{\widetilde{n}}$, threshold $\alpha$}
		\State {$i\gets 1$, $j\gets 0$, $w\gets 0$}
		\For{$1\le i\le n$} \Comment{Outer loop}
		\While{$j\le n$ AND $w<\alpha $}  \Comment{Inner loop}
		\State {$j\gets j+1$, $w\gets w+{w}_{j}$}
		\EndWhile
		\If {$w\ge \alpha $} \Comment{(\ref{subproblem}) has a feasible solution}
		\State ${{j}_{\text{opt}}}\left( i \right)=j$
		\EndIf	
		\State{$w\gets w-{w}_{i}$}	
		\EndFor
	\end{algorithmic}
\end{algorithm}	

\section{Experiments}

\subsection{Experimental Settings}

\subsubsection{Baseline Methods}

Based on the survey of related works in Section 2, we adopted two types of baseline methods for comparison, including quantile-based methods and quality-based methods. Quantile-based methods include Quantile Regression Forest (QRF) \cite{meinshausen2006quantile}\footnote{https://cran.r-project.org/web/packages/quantregForest/index.html}, Quantile Regression (QR) \cite{koenker2001quantile}\footnote{https://cran.r-project.org/web/packages/quantreg/index.html}, and Gradient Boosting Decision Tree with Quantile Loss ($\text{QR}_{\text{GBDT}}$) implemented in the Scikit-learn package \cite{pedregosa2011scikit}. On the other hand, quality-based PI methods include IntPred \cite{rosenfeld2018} and Quality-Driven Ensemble (QD-Ens) \cite{pmlr-v80-pearce18a}\footnote{https://github.com/TeaPearce/}, the state-of-the-art approach for neural-network-based PI elicitation.

\subsubsection{Benchmark Datasets}

We compare various methods on 11 datasets from the UCI repository\footnote{http://archive.ics.uci.edu/ml/index.php}. Statistics of these datasets are presented in Table 1. Each dataset is split in train and test sets according to a 80\%-20\% scheme, and we report the average performance over 10 random data splits. The hyper-parameters of all tested methods were tuned via 5-fold cross-validation on the training set. 

\begin{table}[htbp]
	\footnotesize
	\centering
	\begin{tabular}{c|cc}
		\hline
		\ Dataset   & Size   & Dimensionality \\\hline
		\multirow{1}*{Boston Housing}
		& 506  & 13 \\
		\multirow{1}*{Parkinsons}
		& 5875  & 26 \\
		\multirow{1}*{Wine Quality}
		& 1599 & 11 \\
		\multirow{1}*{Forest Fires}
		& 517 & 13 \\
		\multirow{1}*{Concrete Compression Strength}
		& 1030 & 9 \\
		\multirow{1}*{Energy Efficiency}
		& 768 & 8 \\
		\multirow{1}*{Naval Propulsion}
		& 11934 & 16 \\
		\multirow{1}*{Combined Cycle Power Plant}
		& 9568 & 4 \\
		\multirow{1}*{Protein Structure}
		& 45730 & 9 \\
		\multirow{1}*{Communities}
		& 1994 & 128 \\
		\multirow{1}*{Online News Popularity}
		& 39797 & 61 \\
		\hline
	\end{tabular}		
	\caption{Characteristics of the datasets used in the experiments.}
	\label{tab:data}
\end{table}

\subsubsection{Evaluation Metrics}

Following previous works \cite{khosravi2011lower,pmlr-v80-pearce18a,rosenfeld2018}, PICP and MPIW mentioned in Section 2 were adopted as the evaluation metrics.

\subsection{Performance Comparison}

As mentioned earlier, there exists a trade-off between the coverage probability (measured by the PICP metric) and width (measured by the MPIW metric) of extracted PIs. To facilitate the comparison of various methods, we first evaluate their performance when they achieve roughly the same level of PICP. The results are presented in Table 2. As can be seen, HDI-Forest significantly outperforms all other baselines for all but one dataset. Compared with the best-performing baseline method (QD-Ens), HDI-Forest could substantially reduce the average interval width by 34\%, while achieving slightly better coverage probability.

To further compare the performance of HDI-Forest against QD-Ens and QRF, the two top-performing baselines, we examine the MPIW scores of three methods for a range of PICP values. As shown in Fig.2, HDI-Forest still achieves the best performance among all models.

\begin{table*}[tb]
	\normalsize
	\footnotesize
	\centering
	\begin{tabular}{c|p{15mm}<{\centering}|@{\hspace*{0.12cm}}p{18mm}<{\centering}@{\hspace*{0.12cm}}p{15mm}<{\centering}@{\hspace*{0.12cm}}p{15mm}<{\centering}@{\hspace*{0.12cm}}p{15mm}<{\centering}@{\hspace*{0.12cm}}p{15mm}<{\centering}@{\hspace*{0.12cm}}p{15mm}<{\centering}@{\hspace*{0.12cm}}p{15mm}<{\centering}}
		\hline
		\ Dataset  & Metrics & HDI-Forest & QRF & QR & $\text{QR}_{\text{GBDT}}$ & IntPred & QD-Ens \\\hline
		\multirow{2}*{Boston Housing}
		& PICP & \textbf{0.93}   & 0.92 & 0.92 & 0.92 & 0.92 & 0.91 \\
		& MPIW & \textbf{1.02}   & 1.19 & 2.28 & 1.70 & 1.79 & 1.16 \\\hline
		\multirow{2}*{Parkinsons}
		& PICP & \textbf{0.99}   & 0.99 & 0.95 & 0.99 & 0.96 & 0.98 \\
		& MPIW & \textbf{0.11}   & 0.45 & 1.18 & 0.88 & 0.98 & 0.62 \\\hline	
		\multirow{2}*{Wine Quality}
		& PICP & \textbf{0.92}   & 0.92 & 0.91 & 0.92 & 0.92 & 0.92 \\
		& MPIW & \textbf{1.11}   & 3.33 & 2.64 & 2.47 & 2.54 & 2.33 \\\hline	
		\multirow{2}*{Forest Fires}
		& PICP & \textbf{0.95}   & 0.94 & 0.95 & 0.94 & 0.94 & 0.94 \\
		& MPIW & \textbf{0.81}   & 1.24 & 1.09 & 1.04 & 1.03 & 0.96 \\\hline
		\multirow{2}*{Concrete Compression Strength}
		& PICP & 0.94   & 0.94 & 0.94 & 0.94 & 0.94 & \textbf{0.94} \\
		& MPIW & 1.18   & 1.22 & 2.22 & 2.23 & 1.87 & \textbf{1.09} \\\hline
		\multirow{2}*{Energy Efficiency}
		& PICP & \textbf{0.97}   & 0.96 & 0.95 & 0.97 & 0.95 & 0.97 \\
		& MPIW & \textbf{0.39}   & 0.50 & 1.73 & 0.79 & 1.56 & 0.47 \\\hline
		\multirow{2}*{Naval Propulsion}
		& PICP & \textbf{0.99}   & 0.96 & 0.98 & 0.98 & 0.98 & 0.98 \\
		& MPIW & \textbf{0.24}   & 0.68 & 0.89 & 1.34 & 0.73 & 0.28 \\\hline
		\multirow{2}*{Combined Cycle Power Plant}
		& PICP & \textbf{0.95}   & 0.95 & 0.95 & 0.95 & 0.95 & 0.95 \\
		& MPIW & \textbf{0.75}   & 0.78 & 0.97 & 0.90 & 0.84 & 0.86 \\\hline
		\multirow{2}*{Protein Structure}
		& PICP & \textbf{0.95}   & 0.94 & 0.95 & 0.95 & 0.95 & 0.95 \\
		& MPIW & \textbf{1.77}   & 1.82 & 2.76 & 2.36 & 2.15 & 2.27 \\\hline
		\multirow{2}*{Communities}
		& PICP & \textbf{0.92}   & 0.92 & 0.89 & 0.92 & 0.92 & 0.87 \\
		& MPIW & \textbf{1.50}   & 1.73 & 2.03 & 1.69 & 1.94 & 1.74 \\\hline
		\multirow{2}*{Online News Popularity}
		& PICP & \textbf{0.96}   & 0.96 & 0.95 & 0.96 & 0.95 & 0.96 \\
		& MPIW & \textbf{1.18}   & 1.98 & 1.27 & 1.72 & 1.38 & 1.60 \\\hline \hline	 
		\multirow{2}*{Average Performance}
		& PICP & \textbf{0.95}   & 0.95 & 0.94 & 0.95 & 0.94 & 0.94 \\
		& MPIW & \textbf{0.91}   & 1.35 & 1.73 & 1.56 & 1.53 & 1.22 \\\hline	
	\end{tabular}
	
	\caption{Performance comparison of various methods. The results are averaged over 20 random runs, with best results in bold. Here, the best was chosen according to the strict rule that its performance should be equal to or better than all other methods measured by both PICP and MPIW. The last two rows show the average performance of all tested methods.}
	\label{baseline}
\end{table*}

\begin{figure}
	\centering
	\includegraphics[width=8.4cm]{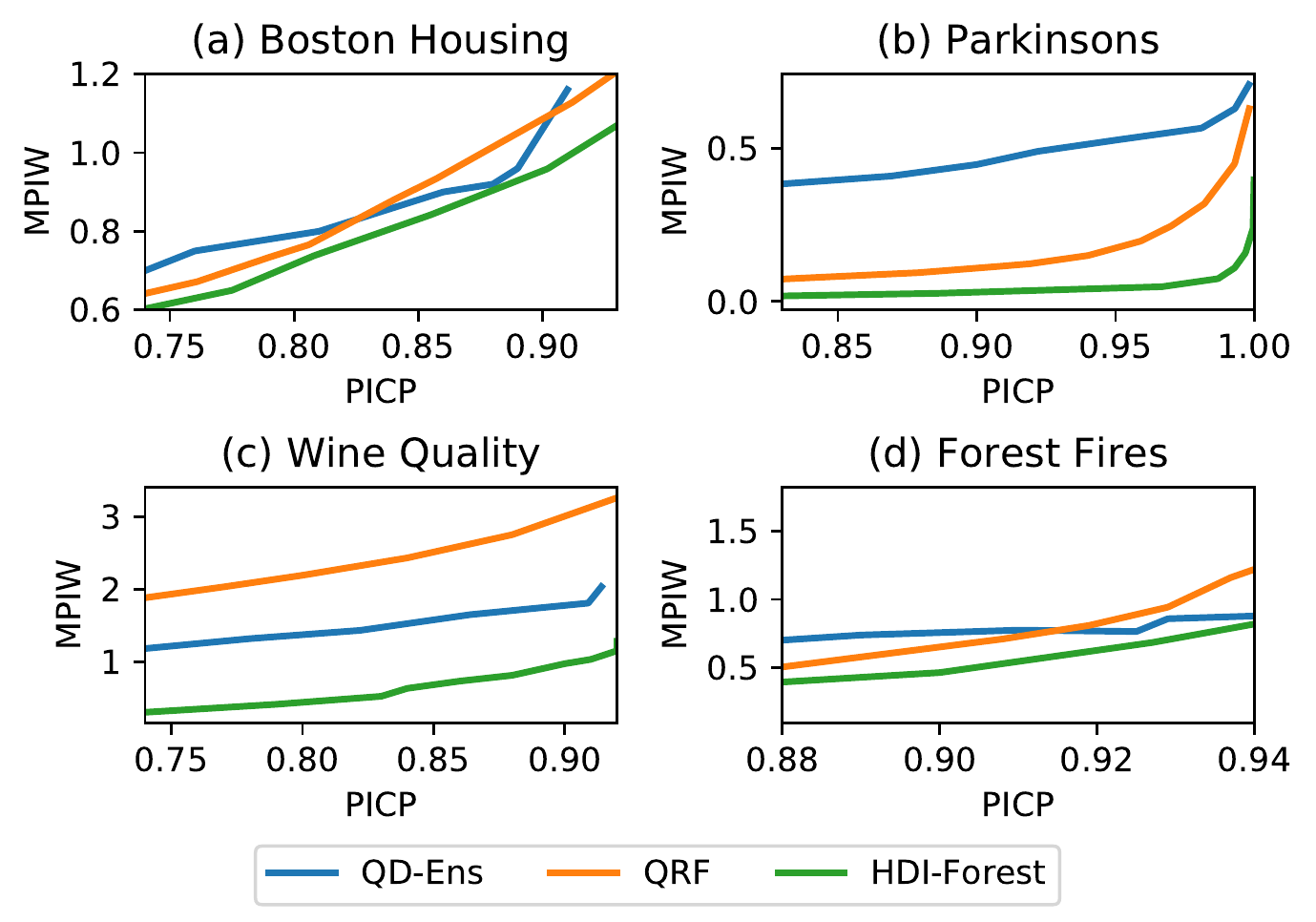} 
	\caption{Comparison between three methods by examining the MPIW score as a function of PICP.}
	\label{Tradeoff}
\end{figure}

\section{Conclusion}

In this paper, we propose HDI-Forest, a novel algorithm for quality-based PI estimation, extensive experiments on benchmark datasets show that HDI-Forest significantly outperforms previous approaches.

For future work, we plan to extend HDI-Forest to the batch learning setting, where the overall performance on a group of test instances can be further improved by adjusting the per-instance coverage probability constraints \cite{rosenfeld2018}. On the other hand, HDI-Forest is based on the original Random Forest model that is mainly suitable for standard regression/classification tasks, however, a large number of Random-Forest-based approaches have been proposed in the literature to handle other types of problems \cite{sathe2017similarity,barbieri2016improving}. It would also be interesting to study quality-based PI estimation for these models.

\bibliographystyle{named}
\bibliography{ijcai17}

\end{document}